\DeclareMathOperator*{\argmin}{arg\,min}
\newtheorem{thm}{Theorem}[section]
\newtheorem{lem}{Lemma}
\newtheoremstyle{propstyle}
  {\topsep} 
  {\topsep} 
  {} 
  {} 
  {\bfseries} 
  {.} 
  {.5em} 
  {} 
\theoremstyle{plain}
\title{Layer-wise Learning of Stochastic Neural Networks with Information Bottleneck}
\author{
 Thanh T. Nguyen \\
  School of Computer Science and Engineering\\
Ulsan National Institute of Science and Technology\\
Ulsan, Republic of Korea 44919 \\
\texttt{thanhnt@unist.ac.kr} \\
  \And 
  Jaesik Choi  \\
    School of Computer Science and Engineering\\
Ulsan National Institute of Science and Technology\\
Ulsan, Republic of Korea 44919 \\
\texttt{jaesik@unist.ac.kr} \\
}
\begin{document}

\maketitle

\begin{abstract}
Information Bottleneck (IB) is a generalization of rate distortion theory that naturally incorporates compression and relevance trade-offs for learning. Though the original IB has been extensively studied, there has not been much understanding about multiple bottlenecks which better fit in the context of neural networks. In this work, we propose Information Multi-Bottlenecks (IMBs)
 as an extension of IB to multiple bottlenecks which has a direct application to training neural networks by considering layers as multiple bottlenecks and weights as parameterized encoders and decoders. We show that the multiple optimality of IMB are not simultaneously achievable for stochastic encoders. We thus propose a simple compromised scheme of IMB which in turn generalizes maximum likelihood estimate (MLE) principle in the context of stochastic neural networks. We demonstrate the effectiveness of IMB on classification tasks and adversarial robustness in MNIST and CIFAR10. 

\end{abstract}

\section{Introduction}
\label{sec:intro}
The Information Bottleneck (IB) principle \cite{Tishby99} 
extracts relevant information about a target variable $Y$ from an input variable $X$ via a \textit{single} bottleneck variable $Z$. In detail, the IB framework constructs a \textit{bottleneck} variable $Z = Z(X)$ that is a \textit{compressed} version of $X$ but preserves as much \textit{relevant} information in $X$ about $Y$ as possible. The compression of the representation $Z$ is quantized by $I(Z;X)$, the mutual information of $Z$ and $X$. The relevance in $Z$, the amount of information $Z$ contains about $Y$, is specified by $I(Z;Y)$. An optimal representation $Z$ satisfying a certain compression-relevance trade-off constraint is then determined via minimization of the following Lagrangian $\mathcal{L}_{IB}[p(z|x)] = I(Z;X) - \beta I(Z;Y)$,
where $\beta$ is a positive Lagrangian multiplier that controls the trade-off.


Deep neural networks (DNNs) have demonstrated state-of-the-art performances in several important machine learning tasks including image recognition \cite{DBLP:conf/nips/KrizhevskySH12}, natural language translation \cite{cho2014learning,bahdanau2014neural} and game playing \cite{DBLP:journals/nature/SilverHMGSDSAPL16}. Behind the practical success of DNNs there are various revolutionary techniques such as 
data-specific design of network architecture (e.g., convolutional neural network architecture), regularization techniques (e.g., early stopping, weight decay, dropout \cite{DBLP:journals/jmlr/SrivastavaHKSS14}, and batch normalization \cite{DBLP:conf/icml/IoffeS15}), and optimization methods \cite{DBLP:journals/corr/KingmaB14}. For learning DNNs, the maximum likelihood estimate (MLE) principle (in its various forms such as maximum log-likelihood or Kullback-Leibler divergence) has generally become a de-facto standard. The MLE principle maximizes the likelihood of the model for observing the entire training data. This principle is, however, generic and not specially tailored to hierarchy-structured models like neural networks. Particularly, MLE treats the entire neural network as a collective body without considering an explicit contribution of its hidden layers to model learning. As a result, the information contained within the hidden structure may not be adequately modified to capture the data regularities reflecting a target variable. Thus, a reasonable question to ask is whether the MLE principle effectively and sufficiently exploits a neural network's representative power and whether there is any better alternative?

In this work, we propose a unifying perspective to bridge between IB and neural networks via Information Multi-Bottlenecks (IMBs). The core idea is that we extend the original IB to multiple bottlenecks whereby neural networks can be viewed as a parameterized version of IMBs. We show a conflicting optimality for multiple bottlenecks under some mild conditions which are the case for stochastic neural networks. When applied to stochastic neural networks, IMBs readily enjoy various mutual information approximations (such as Variational Information Bottleneck \cite{DBLP:journals/corr/AlemiFD016}) and gradient estimation. Consequently, we show how IMBs provide an alternative learning principle for stochastic neural networks as compared to the standard MLE principle. Finally, we demonstrate the superior or at least competitive empirical performance of IMBs, compared to MLE, in terms of generalization, and adversarial robustness for stochastic neural networks on MNIST and CIFAR10. We also show that IMBs empirically learn the neural representations better in terms of information exploitation.

This paper is organized as follows. We first review related literature in Section \ref{sec:related_work}. Section \ref{sec:pib} introduces our Information Multi-Bottlenecks with some important insights and how it is applied to stochastic neural networks. Section \ref{sec:app} demonstrates a case study that we apply our IMB to binary stochastic neural networks. Finally, Section \ref{sec:exp} presents the empirical results of our framework on MNIST and CIFAR10, in comparison with MLE and Variational Information Bottleneck.

\section{Related Work}
\label{sec:related_work}
Our IMBs are a direct multi-bottleneck extension of Information Bottleneck (IB) proposed in \cite{Tishby99}. IB generalizes the rate distortion theory to better fit in the scenario of learning. IB extracts the relevant information in one variable $X$ about another variable $Y$ via some intermediate variable $Z$. This IB problem has been solved efficiently in the following three cases only: (1)  $X,Y$ and $Z$ are all discrete \cite{Tishby99}; (2) $X,Y$ and $Z$ are mutually joint Gaussian \cite{DBLP:journals/jmlr/ChechikGTW05}; (3) or $(X,Y,Z)$ has meta-Gaussian distributions \cite{DBLP:conf/nips/ReyR12}. However, the extension of the IB principle to multiple bottlenecks is not straightforward. In addition, the IB principle has been proven to be mathematically equivalent to the MLE principle for the multinomial mixture model for the clustering problem when the input distribution $X$ is uniform or has a large sample size \cite{Slonim03}. It is, however, not clear how the IB principle is related to the MLE principle in the context of DNNs. In IMBs, we extend IB to multiple bottlenecks and show the connection between IMBs and MLE in stochastic neural networks. 

Our work also shares with the literature in training DNNs. Perhaps the most common way to generalize the MLE principle in DNNs is to apply Bayesian modeling to the weights of DNNs \cite{Neal:1995:BLN:922680,DBLP:journals/neco/MacKay92a,DBLP:journals/neco/DayanHNZ95}. This approach equips DNNs with uncertainty reasoning and takes good advantage of the well-studied tools in probability theory. As a result, this idea achieved interpretability and state-of-the-art performance on many tasks \cite{Gal2016Uncertainty}. The main challenges of this approach is to approximate the intractable posterior and scale the model to high-dimensional data. IMBs, on the other hand, take a different but very natural perspective on DNNs by reasoning about the learning in terms of information contained in the neural representations. 

Some works have considered applying Information Bottleneck to multiple layers in neural networks. For example, \cite{DBLP:conf/itw/TishbyZ15} proposes the use of the mutual information of a hidden layer with the input layer and the output layer to quantify the performance of DNNs. A different family of work along the line approximates the mutual information of high-dimensional variables arose from DNNs \cite{DBLP:conf/uai/StrouseS16,DBLP:conf/nips/ChalkMT16,DBLP:journals/corr/AlemiFD016}. While we also apply it to neural networks, the key difference in our work is that we explicitly develop a framework of Information Multi-Bottlenecks which are specifically calibrated to each layers of neural networks. 
 
A possibly (but not closely) related work is the idea of layer-wise learning in IMBs in which certain aspects of each layer are considered to train a neural network. A great deal of work along this line is greedy layer-wise training of DNNs \cite{Hinton2006,Bengio2006}. While this line of work uses unsupervised greedy pretraining of DNNs, IMB is not necessarily a pretraining method nor a greedy algorithm. Especially, the unsupervised greedy pretraining of stacked autoencoder is proven to be equivalent to maximizing mutual information between the input variable and latent variable \cite{Vincent2008}. In contrast, IMBs follow a different principle which aims at preserving the relevant information under the compression at the same time.

\section{Information Multi-Bottlenecks}
\label{sec:pib}
\paragraph{Notations} We denote random variables (RVs) by capital letters, e.g., $X$, and its specific realization value by the corresponding little letter, e.g., $x$. Note that $x$ can be vector-valued.  We write $Y \rightarrow X \rightarrow Z$ to denote a Markov chain where $Y$ and $Z$ are independent given $X$. We write $X \perp Y$ (respectively, $X \not \perp Y$) to indicate $X$ and $Y$ are independent (respectively, not independent) and abuse the notation integral, e.g.,  $\int f(z) d z$, regardless of whether the variable $ {z}$ is real-valued or discrete-valued.

\subsection{Information Multi-Bottlenecks}
Information Bottleneck (\cite{Tishby99}) extends the notion of rate distortion theory to incorporate compression and relevance trade-offs which are more suitable for learning than rate distortion theory. The information about data variables $X$ and $Y$ (e.g., $X$ represents MNIST images and $Y$ represents the MNIST labels) is squeezed through a bottleneck (possibly multivariate) variable $Z$. The goal is to find an optimal encoder $p(z|x)$ such that $Z$ preserves only relevant information in $X$ about $Y$ and compress irrelevant one. This goal can be formally described in a constrained optimization in terms of mutual information. 

Here we consider a natural extension of IB to multiple bottlenecks $Z_1, Z_2, ..., Z_L$ such that the bottlenecks form a Markov chain. The extension simply introduces multiple constrained Information Bottleneck optimization problems for each of the bottlenecks:
\begin{align*}
    \min_{p( {z}_l| {x})} \mathcal{L}_l[p( {z}_l| {x})] := \min_{p( {z}_l| {x})} \{ I(Z_l; X) - \beta_l I(Z_l; Y) \}, 1 \leq l \leq L. 
\end{align*}
Despite being simple, this multi-objective optimization is very challenging especially when the bottlenecks are high-dimensional. This extension is also essential in the context of neural networks as it solves an important problem that otherwise it is not possible in the conventional IB. The approach that uses the conventional IB to neural networks (e.g., \cite{DBLP:journals/corr/AlemiFD016} directly apply IB to neural networks by considering the entire neural network as a parameterized encoder in IB) does not fully leverage the multi-layered structure of neural networks. Each hidden layer has an important contribution to the compression and relevance process in the neural networks; thus should be explicitly considered in the learning process via information bottleneck principle. Therefore, multiple bottlenecks are more suitable in neural networks, and thus it is important to understand multiple bottlenecks. In the following theorem, we identify when it is possible to obtain a non-conflicting optimal solutions for multiple bottlenecks. 
\begin{thm}[\textit{Conflicting Multi-Information Optimality}]
\label{thm:conflicting_optimality}
Given four random variables $X, Y, Z_1,$ and $Z_2$ such that $Y \rightarrow X \rightarrow Z_1 \rightarrow Z_2$ and $H(Y|X) > 0$ and two constrained minimization problems: 
\begin{align}
\label{eq:MO_lagrange}
\argmin_{p( {z}_l| {x})} \mathcal{L}_l[p( {z}_l| {x})] := \argmin_{p( {z}_l| {x})} \{ I(Z_l; X) - \beta_l I(Z_l; Y) \}, l \in \{1,2\}
\end{align}
where $0 < \beta_1 < \infty$, $0 < \beta_2 < \infty$, and $p(z_2|x) = \int p(z_2 | z_1) p(z_1 |x) dz_1$. Then, the following two statements are equivalent: 
\begin{enumerate}
    \item $Z_1$ and $Z_2$ do not satisfy either of the following conditions: 
    \begin{enumerate}
        \item $Z_2$ is a \textit{sufficient statistics} of $Z_1$ for $X$ and $Y$ (i.e., $Y \rightarrow X \rightarrow Z_2 \rightarrow Z_1$), 
        \item $Z_2$ is independent of $Z_1$. 
    \end{enumerate}
    
    \item The two constrained minimization problems defined above are \underline{conflicting}, i.e., there does not exist a single solution that minimizes $\mathcal{L}_1$ and $\mathcal{L}_2$ simultaneously.
\end{enumerate}
\end{thm}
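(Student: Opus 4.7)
The plan is to establish the equivalence by proving each implication separately, equivalently showing $((a) \text{ or } (b)) \iff \text{non-conflicting}$.

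First, for the direction $(a) \text{ or } (b) \Rightarrow \text{non-conflicting}$, I would handle the two cases separately. In case (a), both Markov chains $Y \rightarrow X \rightarrow Z_1 \rightarrow Z_2$ and $Y \rightarrow X \rightarrow Z_2 \rightarrow Z_1$ hold; applying the data processing inequality along each chain in turn yields the equalities $I(Z_1; X) = I(Z_2; X)$ and $I(Z_1; Y) = I(Z_2; Y)$. The two Lagrangians therefore become linear functionals of the same pair of mutual informations over the same admissible set, so any minimizer of $\mathcal{L}_1$ coupled through an information-preserving kernel $p(z_2|z_1)$ (for example, $Z_2 = Z_1$) furnishes a joint minimizer. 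In case (b), the Markov chain $X \rightarrow Z_1 \rightarrow Z_2$ combined with $Z_1 \perp Z_2$ forces $p(z_2|z_1) = p(z_2)$; marginalizing over $z_1$ gives $p(z_2|x) = p(z_2)$, so $Z_2 \perp X$ and a fortiori $Z_2 \perp Y$. Consequently $\mathcal{L}_2 \equiv 0$, the two problems decouple, and any minimizer of $\mathcal{L}_1$ attached to an independent $Z_2$ is a joint minimizer.

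For the converse, I would argue by contraposition: assuming neither (a) nor (b), produce a conflict. The data processing inequality along $Y \rightarrow X \rightarrow Z_1 \rightarrow Z_2$ gives $I(Z_2; X) \leq I(Z_1; X)$ and $I(Z_2; Y) \leq I(Z_1; Y)$; negation of (a) implies strict inequality in at least one of these, while negation of (b) guarantees that $Z_2$ retains strictly positive information about $Z_1$. I would then invoke the self-consistent Boltzmann-form characterization of the IB optimum: the minimizer $p^{*}(z_l|x)$ of $\mathcal{L}_l$ is pinned down by $\beta_l$ up to reparametrization. Compatibility of the two such minimizers via an admissible Markov kernel $p(z_2|z_1)$ forces a specific relationship between the operating points of $Z_1$ and $Z_2$ on the IB curve; under $H(Y|X) > 0$, which ensures the IB curve has a nontrivial shape, this relationship can be realized only in the degenerate regimes of case (a) or case (b).

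The main obstacle is this converse direction, particularly ruling out ``intermediate'' compressions where $Z_2$ is a nontrivial but imperfect compression of $Z_1$. I would handle it via a perturbation argument: given a candidate joint minimizer lying outside (a) and (b), construct a local variation of the Markov kernel $p(z_2|z_1)$ that strictly decreases $\mathcal{L}_2$ without altering $p(z_1|x)$, contradicting joint optimality. The hypothesis $H(Y|X) > 0$ is essential here, since it guarantees that the achievable region of $(I(Z;X), I(Z;Y))$ has a nontrivial interior on which such perturbations are well-defined and strictly improving.
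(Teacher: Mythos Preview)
Your treatment of the easy direction, $(a)\text{ or }(b)\Rightarrow\text{non-conflicting}$, is correct and essentially matches the paper's argument (the paper simply observes that under (a) one has $I(Z_2;X)=I(Z_1;X)$ and $I(Z_2;Y)=I(Z_1;Y)$, and under (b) the $\mathcal{L}_2$ problem trivializes, so one just solves the single IB for $Z_1$ and attaches a compatible $Z_2$).

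The hard direction is where your proposal diverges from the paper and where it has a genuine gap. Your perturbation argument does not get off the ground: you assume by contradiction that $(p(z_1|x),\,p(z_2|z_1))$ is a \emph{joint} minimizer, which in particular means it minimizes $\mathcal{L}_2$ over all admissible encoders $p(z_2|x)=\int p(z_2|z_1)p(z_1|x)\,dz_1$. But $p(z_2|z_1)$ is itself one of the free optimization variables for $\mathcal{L}_2$, so no local variation of $p(z_2|z_1)$ can strictly decrease $\mathcal{L}_2$ --- that is exactly what being at a minimum of $\mathcal{L}_2$ means. The contradiction you seek therefore cannot be produced in this way. Your earlier appeal to the Boltzmann self-consistency form of IB optima is also too schematic: you assert that compatibility of the two fixed-point solutions through a Markov kernel forces the degenerate cases, but no mechanism is given, and the hypothesis $H(Y|X)>0$ is invoked only heuristically.

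The paper's route is different and more concrete. It first records the chain-rule identities $I(Z_2;X)=I(Z_1;X)-I(Z_1;X\mid Z_2)$ and $I(Z_2;Y)=I(Z_1;Y)-I(Z_1;Y\mid Z_2)$, and introduces the \emph{conditional} IB objective $\mathcal{L}^c:=I(Z_1;X\mid Z_2)-\beta_2 I(Z_1;Y\mid Z_2)$. At a putative joint minimizer, the first-order stationarity conditions $\partial L_1/\partial p(z_1|x)=0$ and $\partial L_2/\partial p(z_1|x)=0$ (with Lagrange multipliers $\lambda_1(x),\lambda_2(x)$ for normalization) must both hold; subtracting and using the identities above gives
\[
\frac{\partial \mathcal{L}^c}{\partial p(z_1|x)} \;=\; (\beta_1-\beta_2)\,\frac{\partial I(Z_1;Y)}{\partial p(z_1|x)} \;+\; \lambda_1(x)-\lambda_2(x).
\]
The key structural lemma is that, unless (a) or (b) holds, the left-hand side genuinely depends on $p(z_2|z_1)$ while the right-hand side does not; this incompatibility is the contradiction (and $H(Y|X)>0$ is used precisely to prevent $\mathcal{L}^c$ from collapsing). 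If you want to rescue your approach, the natural fix is to recast your perturbation idea as a statement about incompatibility of first-order conditions rather than about decreasing $\mathcal{L}_2$ from its own minimum --- which is essentially the paper's argument.
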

\begin{proof}[\textit{Sketch proof}]
 Leveraging the Markovian structure of four random variables in the two Information Bottleneck objectives and using Lagrangian multipliers. The detailed proof is in Appendix A (the supplementary). 
\end{proof}

Theorem \ref{thm:conflicting_optimality} suggests that the multi-information optimality (the optimal conditions for each of the individual Information Bottleneck optimization problems in IMBs) conflicts for most cases of interest, e.g. stochastic neural networks which we will present in detail in next subsection. The values of $\beta_1$ and $\beta_2$ are important to control the number of bits we extract the relevant information into the bottlenecks and determine the conflictability of multiple bottlenecks on the edge cases. Recall that for $Y \rightarrow X \rightarrow X$, we have $0 \leq I(Z;X) \leq H(X); 0 \leq I(Z;Y) \leq I(X;Y)$. If $\beta_1$ and $\beta_2$ go to infinity, the optimal bottlenecks $Z_1$ and $Z_2$ are both deterministic function of $X$ and they do not conflict. When $\beta_1 = \beta_2 = 0$, the information about $Y$ in $X$ is maximally compressed in $Z_1$ and $Z_2$ (i.e., $Z_1 \perp X, Z_2 \perp X$), and they do not conflict. But the optimal solutions conflict when $\beta_1 = 0$ and $\beta_2 > 0$ as the former leads to a maximally compressed $Z_1$ while the latter prefers an informative $Z_2$ (this contradicts the Markov structure $X \rightarrow Z_1 \rightarrow Z_2$ which indicates that maximal compression in $Z_1$ leads to maximal compression in $Z_2$). We can also easily construct some non-conflicting IMBs for $0 \leq \beta_1, \beta_2 \leq \infty$ that violates the conditions. For example, if $X$ and $Y$ are jointly Gaussian, the optimal bottlenecks $Z_1$ and $Z_2$ are linear transform of $X$ and jointly Gaussian with $X$ and $Y$ (\cite{DBLP:journals/jmlr/ChechikGTW05}). In this case, $Z_2$ is a sufficient statistic of $Z_1$ for $X$. In the case of neural networks, we can also construct a simple but non-trivial neural network that can obtain a non-conflicting multi-information optimality. For example, consider a neural network of two hidden layers $Z_1$ and $Z_2$ where $Z_1$ is arbitrarily mapped from the input layer $X$ but $Z_2$ is a sample mean of $n$ samples i.i.d. drawn from the normal distribution $\mathcal{N}(Z_1; \Sigma)$. This construction guarantees that $Z_2$ is a sufficient statistic of $Z_1$ for $X$, thus there is non-conflicting multi-information optimality.

\subsection{Stochastic Neural Networks}
Stochastic neural networks has been studied in literature \cite{DBLP:conf/nips/TangS13, DBLP:journals/corr/RaikoBAD14, DBLP:journals/access/ShafieeSW16}. One important advantage of stochastic neural networks is that they can induce rich multi-model distributions in the output space \cite{DBLP:conf/nips/TangS13} and enable exploration in reinforcement learning \cite{DBLP:conf/iclr/FlorensaDA17}. 
Here we consider a stochastic neural network with $L$ hidden layers without any feedback or skip connection, we view the input layer $X$, the output of the $l^{th}$ hidden layer $Z_l$, and the network output layer $\hat{Y}$ as random variables (RVs). Without any feedback or skip connection, $Y, X, Z_l, Z_{l+1},$ and $\hat{Y}$ form a Markov chain in that order, denoted as: 
\begin{align}
\label{eq:markov_chain}
Y \rightarrow X \rightarrow Z_l \rightarrow Z_{l+1} \rightarrow \hat{Y}
\end{align}


The role of the neural network is, therefore, reduced to transforming from one RV to another via the Markov chain $X \rightarrow Z_l \rightarrow Z_{l+1} \rightarrow \hat{Y}$ where $\hat{Y}$ is used as a surrogate for $Y$. We call the transition distribution $p(Z_l |X)$\footnote{If the mapping from $X$ to $Z_l$ is deterministic, then $p(Z_l |X)$ is simply a delta function.} from $X$ to $Z_l$ an \textit{encoder} as it encodes the data $X$ into the representation $Z_l$. For each encoder $p(Z_l |X)$, there is a unique corresponding decoder, namely \textit{relevance decoder}, that decodes the relevant information in $X$ about $Y$ from representation $Z_l$:
\begin{align}
\label{eq:relevance_decoder}
p( {y}| {z}_l) = \int p_D( {x}, {y}) \frac{p( {z}_l| {x})}{p( {z}_l)} d {x}
\end{align}
It follows from the Markov chain in Equation (\ref{eq:markov_chain}) that inference in a neural network can be done as: 
 \begin{align}
p(\hat{ {y}}| {x}) &= \int p(\hat{ {y}},  {z}|  {x}) d {z}  = \int p(\hat{ {y}}| {z}) p( {z}| {x}) d {z} = \int \prod_{l=1}^{L+1} p( {z}_l |  {z}_{l-1}) d {z}
\end{align} 
where $ {z} = ( {z}_1, ...,  {z}_L)$, $Z_0 := X$ and $Z_{L+1} := \hat{Y}$. 



\subsection{Stochastic Neural Networks as Information Muti-Bottlenecks}
We consider a stochastic neural network as a parameterized version of Information Multi-Bottlenecks in which each layer is a bottleneck and the weights connecting the layers are parameterized encoders and relevance decoders. Specifically, $p(Z_l| X)$ is parameterized by the sub-network from the input layer to layer $l$. In this perspective, a stochastic neural network is a data-processing system that transforms a data distribution via a series of bottlenecks $Z_l$. Thus, the role of each layer can be interpreted as information filter that compress irrelevant information and preserve relevant one. This notion of compression and relevance can be captured with mutual information $I(Z_l; X)$ and $I(Z_l; Y)$, respectively. Subsequently, it is natural to interpret the learning of a stochastic neural networks as a multi-objective optimization: 
\begin{align}
\label{eq:MO_lagrange}
\argmin_{p(z_l|x)} \mathcal{L}_l[p( {z}_l| {x})] := \argmin_{p(z_l|x)} \{ I(Z_l; X) - \beta_l I(Z_l; Y)\}, 1 \leq l \leq L.
\end{align}
where  $\beta_l$ are the positive Lagrange multipliers for the constraints. 

We first present how to approximate mutual information $I(Z_l; X)$ and $I(Z_l; Y)$ using variational mutual information. After that, we present how to solve the multi-objective optimization in Eq. \ref{eq:MO_lagrange}. 
\subsubsection{Approximate Relevance} The relevance $I(Z_l;Y)$ is intractable due to the intractable relevance decoder $p( {y}| {z}_l)$ in Equation (\ref{eq:relevance_decoder}). It follows from the non-negativity of Kullback-Leibler divergence that:
\begin{align}
H(Y|Z_l) &= -\int p( {y}| {z}_l) p( {z}_l) \log p( {y}| {z}_l) d {y}d  {z}_l  \leq -\int p( {y}| {z}_l) p( {z}_l) \log p_{v}( {y}| {z}_l) d {y} d {z}_l \nonumber \\
\label{variational_relevance}
& \!\!\! = -\mathbb{E}_{(X,Y)_D}  \mathbb{E}_{Z_l|X} \log p_{v}(Y|Z_l)  =: \tilde{H}(Y|Z_l) 
\end{align}
where $p_{v}( {y}| {z}_l)$ is any probability distribution. Note that $I(Z_l; Y) = H(Y) - H(Y|Z_l)$ where $H(Y) = constant$ which can be ignored in the minimization of $\mathcal{L}_l$. Specifically in IMB, we propose to use the network architecture connecting $Z_l$ to $\hat{Y}$ to define the variational relevance decoder for layer $l$, i.e., $p_{v}( {y}| {z}_l) = p( {\hat{y}}| {z}_l)$ where $p( {\hat{y}}| {z}_l)$ is determined by the network architecture:
\begin{align}
\label{eq:stochastic_prediction_at_l}
p_{v}( {y}| {z}_l) &:= p( {\hat{y}}| {z}_l) = \int \prod_{i=l}^{L} p( {z}_{i+1} |  {z}_i) d {z}_{L} ... d {z}_{l+1} = \mathbb{E}_{Z_L |  {z}_l} \left[ p( {\hat{y}}|Z_L) \right].
\end{align}

For the rest of this work, we refer to $\tilde{H}(Y|Z_l)$ with $p_{v}( {y}| {z}_l) = p( {\hat{y}}| {z}_l)$ as the \textit{variational conditional relevance} (VCR) of the $l^{th}$ layer. Theorem \ref{prop:1} addresses the relation between VCR and the MLE principle.  
\begin{thm}[\textit{Information on the extreme layers}]
The VCR of the lowest-level (so-called \textit{\textbf{super}}) layer (i.e., $l=0$) is the negative log-likelihood (NLL) function of the neural network, i.e.,  
\begin{align}
    \tilde{H}(Y | Z_0) = -\mathbb{E}_{(X, Y)_D} \left[ \log p(\hat{Y}| X) \right]. 
\end{align}
Similarly, the VCR of the highest-level layer (i.e., $l=L$) equals that of the \textit{\textbf{compositional}} layer $Z = (Z_1, Z_2, ..., Z_L)$, a composite of all hidden layers; in addition, their VCR is an upper bound on the NLL:
\begin{align}
    \tilde{H}(Y|Z_L) = \tilde{H}(Y|Z) \geq - \mathbb{E}_{(X,Y)_D} \left[ \log p(\hat{Y}|X) \right].
\end{align}
\label{prop:1}
\end{thm}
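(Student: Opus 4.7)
The plan is to unpack the definition of VCR at the two extreme indices and then appeal to the Markov structure in Equation~\eqref{eq:markov_chain} plus Jensen's inequality. The whole proof is essentially a bookkeeping exercise once those two ingredients are in place.

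First, for the super-layer case $l=0$, I would take $Z_0:=X$ as defined after the inference equation, so that $p(Z_0\mid X)$ is a delta at $X$ and the inner expectation $\mathbb{E}_{Z_0\mid X}$ disappears. Then, by the definition of $p_v(y\mid z_l)$ in Equation~\eqref{eq:stochastic_prediction_at_l}, the variational decoder at level $0$ is $p_v(y\mid z_0)=\int\prod_{i=0}^{L}p(z_{i+1}\mid z_i)\,dz_1\cdots dz_L$, which is exactly $p(\hat y\mid x)$ from the inference equation. Substituting these two observations into $\tilde H(Y\mid Z_l)=-\mathbb{E}_{(X,Y)_D}\mathbb{E}_{Z_l\mid X}\log p_v(Y\mid Z_l)$ yields $\tilde H(Y\mid Z_0)=-\mathbb{E}_{(X,Y)_D}[\log p(\hat Y\mid X)]$, which is the NLL.

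Next, for $l=L$, the product in Equation~\eqref{eq:stochastic_prediction_at_l} is empty except for the single factor $p(\hat y\mid z_L)$, so $p_v(y\mid z_L)=p(\hat y\mid z_L)$ and $\tilde H(Y\mid Z_L)=-\mathbb{E}_{(X,Y)_D}\mathbb{E}_{Z_L\mid X}\log p(\hat Y\mid Z_L)$. For the compositional layer $Z=(Z_1,\dots,Z_L)$, the variational decoder is $p(\hat y\mid z_1,\dots,z_L)$; by the Markov chain $X\to Z_1\to\cdots\to Z_L\to\hat Y$, $\hat Y$ is conditionally independent of $(Z_1,\dots,Z_{L-1})$ given $Z_L$, so $p(\hat y\mid z_1,\dots,z_L)=p(\hat y\mid z_L)$. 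Marginalising the joint $p(z_1,\dots,z_L\mid x)$ down to $p(z_L\mid x)$ then shows $\tilde H(Y\mid Z)=\tilde H(Y\mid Z_L)$.

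Finally, for the inequality I would apply Jensen to the convex function $-\log$. Since $p(\hat Y\mid X)=\mathbb{E}_{Z_L\mid X}[p(\hat Y\mid Z_L)]$ by marginalising along the Markov chain, Jensen gives
\begin{equation*}
-\mathbb{E}_{Z_L\mid X}\log p(\hat Y\mid Z_L)\;\geq\;-\log\mathbb{E}_{Z_L\mid X}p(\hat Y\mid Z_L)\;=\;-\log p(\hat Y\mid X),
\end{equation*}
and taking the outer expectation over the data yields $\tilde H(Y\mid Z_L)\geq -\mathbb{E}_{(X,Y)_D}[\log p(\hat Y\mid X)]$. I do not anticipate any real obstacle; the only point that requires a touch of care is verifying that the Markov independence correctly collapses $p(\hat y\mid z_1,\dots,z_L)$ to $p(\hat y\mid z_L)$ so that the compositional and top-layer VCRs coincide, and that the marginal $\int p(z_1,\dots,z_L\mid x)\,dz_1\cdots dz_{L-1}$ really is $p(z_L\mid x)$ inside the expectation.
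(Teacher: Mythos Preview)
Your proposal is correct and follows essentially the same route as the paper's own proof: both establish the $l=0$ case directly from the definition with $Z_0=X$, use the Markov chain to collapse $p(\hat y\mid z_1,\dots,z_L)$ to $p(\hat y\mid z_L)$ for the equality $\tilde H(Y\mid Z)=\tilde H(Y\mid Z_L)$, and invoke Jensen's inequality on $-\log$ for the upper bound on the NLL. The only cosmetic difference is that the paper applies Jensen with the compositional variable $Z$ (writing $\int p(z\mid x)\log p(\hat y\mid z)\,dz\le\log\int p(z\mid x)p(\hat y\mid z)\,dz$) and then identifies the result with $\tilde H(Y\mid Z_L)$, whereas you apply Jensen directly at level $Z_L$; since you have already argued $p(\hat y\mid x)=\mathbb{E}_{Z_L\mid X}p(\hat y\mid Z_L)$, these are equivalent.
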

\begin{proof}[Sketch Proof]
Using the definition of VCR, the Markov Chain assumption, and Jensen's inequality. The detailed proof is in Appendix B (the supplementary). The details of how to decompose VCR for multivariate variable $Y$ can also be found in Appendix F (the supplementary). 
\end{proof}

An interpretation of MLE in terms of VCR and vice versus is immediately followed from Theorem \ref{prop:1}. That said, the MLE principle is to optimize the super-level VCR while VCR allows an explicit extension of this concept to any layer.
\subsubsection{Approximate Compression}
While $p( {z}_l |  {z}_{l-1})$ in DNNs has an analytical form, $p( {z}_l |  {x})$ for $l > 1$ generally does not as it is a mixture of $p( {z}_l |  {x})$. We thus propose to avoid directly estimating $I(Z_l; X)$ by instead resorting to its upper bound $I(Z_l; Z_{l-1})$ as its surrogate in the optimization.  However, $I(Z_l; Z_{l-1})$ is still intractable as it has the intractable distribution $p( {z}_l)$. We then approximate $I(Z_l; Z_{l-1})$ using a mean-field (factorized) variational distribution $ r( {z}_l) = \prod_{i=1}^{n_l} r(z_{l,i})$:
\begin{align}
\label{eq:comp_est}
I(Z_l; X) \leq I(Z_l; Z_{l-1}) \leq \mathbb{E}_{Z_{l-1}} \sum_{i=1}^{n_l} D_{KL} \left[ p(Z_{l,i} | Z_{l-1}) || r(Z_{l,i}) \right] =: \tilde{I}(Z_l; Z_{l-1}).
\end{align}
The detailed derivations can be found in the Appendix C (the supplementary). 

\subsection{Compromised Information Optimality}
Due to Theorem \ref{thm:conflicting_optimality}, we cannot achieve the information optimality for simultaneously all layers; thus we need some compromised approach to instead obtain a compromised optimality. We propose two natural compromised strategies, namely \texttt{JointIMB} and \texttt{GreedyIMB}. \texttt{JointIMB} (Algorithm \ref{alg:JointPIB}) is a weighted sum of the variation IB objectives $\mathcal{L}^{joint} := \sum_{l=0}^L \gamma_l \mathcal{\tilde{L}}_l$ where $\mathcal{\tilde{L}}_l$ is the variational approximation of $\mathcal{L}_l$ using approximate relevance (Eq. (\ref{variational_relevance})) and approximate compression (Eq. (\ref{eq:comp_est})), and $\gamma_l \geq 0$. The main idea of \texttt{JointMIB} is to simultaneously optimize all encoders and variational relevance decoders. Even though each layer might not achieve its individual optimality, their joint optimality encourages a joint compromise. On the other hand, \texttt{GreedyIMB} applies PIB progressively in a greedy manner. In other words, \texttt{GreedyIMB} tries to obtain the conditional optimality of a current layer which is conditioned on the achieved conditional optimality of the previous layers. 

\begin{wrapfigure}{R}{0.57\textwidth}
\begin{minipage}{0.57\textwidth}
\begin{algorithm}[H]
\caption{\texttt{JointIMB}}
\label{alg:JointPIB}
\begin{algorithmic}[1]
\Procedure{\texttt{JointIMB}}{}
\State \textbf{Input}: $S_0 \leftarrow ( {x}_i,  {y}_i)_{i=1}^{N} \sim p_D( {x},  {y}), \gamma_l, \beta_l$
\State \textbf{Initialization}:  {$\theta$}
\While {not converged}
\For {$i = 1$ to $L$}
\State $S_i = \emptyset$
\For {$z_{i-1} \in S_{i-1}$}
\State $S_i \leftarrow S_i \cup \{ ( {z}^{(k)}_i)_{k=1}^M \sim p( {z}_i |  {z}_{i-1})$\} 
\EndFor
\EndFor
\State $\mathcal{\tilde{L}}^{joint}( {\theta}) \leftarrow$ Eqs (\ref{variational_relevance},\ref{eq:stochastic_prediction_at_l},\ref{eq:comp_est}), and $\{S_i\}_{i=0}^{L}$
\State $ {g} \leftarrow \displaystyle \frac{\partial }{\partial  {\theta}} \mathcal{\tilde{L}}^{joint}( {\theta})$ 
\State  $ {\theta} \leftarrow  {\theta} - \nu  {g}$
\EndWhile
\State \textbf{Output}:  {$\theta$}
\EndProcedure
\end{algorithmic}
\end{algorithm}
\end{minipage}
 \end{wrapfigure}

\section{A Case Study: Binary Stochastic Neural Networks}
\label{sec:app}

To analyze IMB, we apply it to a simple network architecture: binary stochastic feed-forward (fully-connected) neural networks (SFNN) though the extension to real-valued stochastic neural networks are straightforward by using reparameterization tricks (\cite{DBLP:journals/corr/KingmaW13}). In binary SFNN, we use sigmoid as its activation function: 
$p(  {z}_l |   {z}_{l-1}) = \sigma( W_{l-1}   {z}_{l-1} +   {b}_{l-1})$
where $\sigma(.)$ is the (element-wise) sigmoid function, $W_{l-1}$ is the network weights connecting layer $l-1$ to layer $l$, $  {b}_{l-1}$ is a bias vector and $Z_l \in \{0,1\}^{n_l}$. We also make each $Z_{l,i}$ a learnable Bernoulli distribution.

It has been not clear so far how the gradient is computed in stochastic neural network at line $9$ of Algorithm \ref{alg:JointPIB}. The sampling operation in stochastic neural networks precludes the backpropagation in a computation graph. It becomes even more challenging with binary stochastic neural networks as it is not well-defined to compute gradients w.r.t. discrete-valued variables. Fortunately, we can find approximate gradients which has been proved to be efficient in practice: REINFORCE estimator (\cite{DBLP:journals/ml/Williams92,DBLP:journals/corr/BengioLC13}), straight-through estimator (\cite{Hintonlecture}), the generalized EM algorithm (\cite{DBLP:conf/nips/TangS13}), and Raiko (biased) estimator (\cite{DBLP:journals/corr/RaikoBAD14}). Especially, we found the Raiko gradient estimator works best in our specific setting thus deployed it in this application. In the Raiko estimator, the gradient of a bottleneck particle $z_{l,i} \sim p({z}_{l,i} |   {z}_{l-1}) = \sigma (a_i^{(l)})$ is propagated only through the deterministic term $\sigma (a_i^{(l)})$:
$\frac{\partial z_{l,i}}{\partial   {\theta}} \approx \frac{\partial \sigma (a_i^{(l)})}{\partial   {\theta}}.$ 
\section{Experiments}
\label{sec:exp}
\begin{table*}[t]
\vskip 0.15in
  \centering
  \caption{The performance of IMB for classification and adversarial robustness on MNIST and CIFAR10 in comparison with MLE and a partially information-theoretic treatment VIB. IMB utilizes layer-wise compression-relevance trade-offs during the training which outperforms and is more adversarially robust than other models of the same architecture. }
 \label{table:cls_adv}
  
  \begin{tabular}{lcccc}
    \toprule
      \multirow{3}{*}{\textbf{Model}}  &\multicolumn{2}{c}{Classification}   &  \multicolumn{2}{c}{Adv. Robustness (\%)}  \\
 & MNIST & CIFAR10 &\multirow{2}{*}{Targeted} & \multirow{2}{*}{Untargeted} \\
      & (Error \%) & (Accuracy \%) & \\
\midrule
    DET & $1.73  $ & $53.91 $ & $00.00$ & $00.00$  \\ 
    VIB (\cite{DBLP:journals/corr/AlemiFD016}) & $1.45$ & $54.41$ & $83.70$ & $93.10$ \\ 
    SFNN (\cite{DBLP:journals/corr/RaikoBAD14}) & $1.44$ & $55.94 $ & $83.00$ & $95.20$\\ 
    \texttt{GreedyIMB}  & $1.54 $ &  $\textbf{57.61}  $ & $83.21$ & $94.30$ \\ 
    \texttt{JointIMB}   & $\textbf{1.36}$ & $55.62 $ & $ {84.16}$ & $\textbf{96.00}$ \\  
    \bottomrule
  \end{tabular}
  
\end{table*}


In this section, we evaluated IMB with \texttt{GreedyIMB} and \texttt{JointIMB} algorithms on MNIST \cite{LeCun98} and CIFAR10 \cite{cifar10} for classification, learning dynamics and robustness against adversarial attacks. The MNIST dataset consists of 28x28 pixel greyscale images of handwritten digits 0-9, with 60,000 training and 10,000 test examples. The CIFAR10 dataset consists of 60,000 (50,000 for train and 10,000 for test) 32x32 colour images in 10 classes, with 6,000 images per class. 


\subsection{Image classification}



We compared \texttt{JointIMB} and \texttt{GreedyIMB} with other three comparative models which used the same network architecture without any explicit regularizer: (1) Standard deterministic neural network (DET) which simply treats each hidden layer as deterministic; (2) Stochastic Feed-forward Neural Network (SFNN) \cite{DBLP:journals/corr/RaikoBAD14} which is a binary stochastic neural network as in IMB but is trained with the MLE principle; (3) Variational Information Bottleneck (VIB) \cite{DBLP:journals/corr/AlemiFD016} which employs the entire deterministic network as an encoder, adds an extra stochastic layer as a out-of-network bottleneck variable, and is then trained with the IB principle on that single bottleneck layer. The base network architecture in this experiment had two hidden layers with $512$ sigmoid-activated neurons per each layer. The experimental setup details can be found in Appendix D (the supplementary).



The results are shown in Table \ref{table:cls_adv}. Even though we did not optimize for the hyperparameters in IMB, \texttt{JointIMB} already outperforms MLE and VIB on MNIST, \texttt{GreedyMIB} outperforms the other models on CIFAR10. The performance of JointPIB on CIFAR10 is also comparable to MLE. The result suggests a promising effectiveness of explicitly inducing relevant but compressed information into each layer of a neural network via IMBs. 



\subsection{Robustness against adversarial attacks}
We consider here the adversarial robustness of neural networks trained by IMBs. Neural networks are prone to adversarial attacks which disturb the input pixels by small amounts imperceptible to humans \cite{DBLP:journals/corr/SzegedyZSBEGF13,DBLP:conf/cvpr/NguyenYC15}. Adversarial attacks generally fall into two categories: untargeted and targeted attacks. An untargeted adversarial attack $\mathcal{A}$ maps the target model $M$ and an input image $ {x}$ into an adversarially perturbed image $ {x}'$: $\mathcal{A}:(M,  {x}) \rightarrow  {x}'$, and is considered successful if it can fool the model $M( {x}) \neq M( {x}')$. A targeted attack, on the other hand, has an additional target label $l$: $\mathcal{A}:(M,  {x}, l) \rightarrow  {x}'$, and is considered successful if $M( {x}')  = l \neq M( {x})$.

We performed adversarial attacks to the neural networks trained by MLE and IMB, and resorted to the accuracy on adversarially perturbed versions of the test set to rank a model's robustness. In addition, we use the $L_2$ attack method for both targeted and untargeted attacks \cite{DBLP:conf/sp/Carlini017}, which has shown to be most effective attack algorithm with smaller perturbations. Specifically, we attacked the same four comparative models described from the previous experiment on the first $1,000$ samples of the MNIST test set. For the targeted attacks, we targeted each image into the other $9$ labels other than the true label of the image.


The results are also shown in Table \ref{table:cls_adv}. We see that the deterministic model DET is totally fooled by the attacks. It is known that stochasticity in neural networks improves adversarial robustness which is consistent in our experiment as SFNN is significantly more adversarially robust than DET. VIB has compatible adversarial robustness with SFNN even if VIB has ``less stochasticity" than SFNN (VIB has one stochastic layer while all hidden layers of SFNN are stochastic). This is because VIB performance is compensated with IB principle for its stochastic layer. Finally, \texttt{JointIMB} is more adversarially robust than the other models. Explicitly inducing compression and relevance into each layers thus has a potential of being more adversarially robust. 



\subsection{Learning dynamics}
\begin{figure*}[t]
  \centering
  \begin{minipage}[b]{0.5\textwidth}
	\includegraphics[scale=0.07]{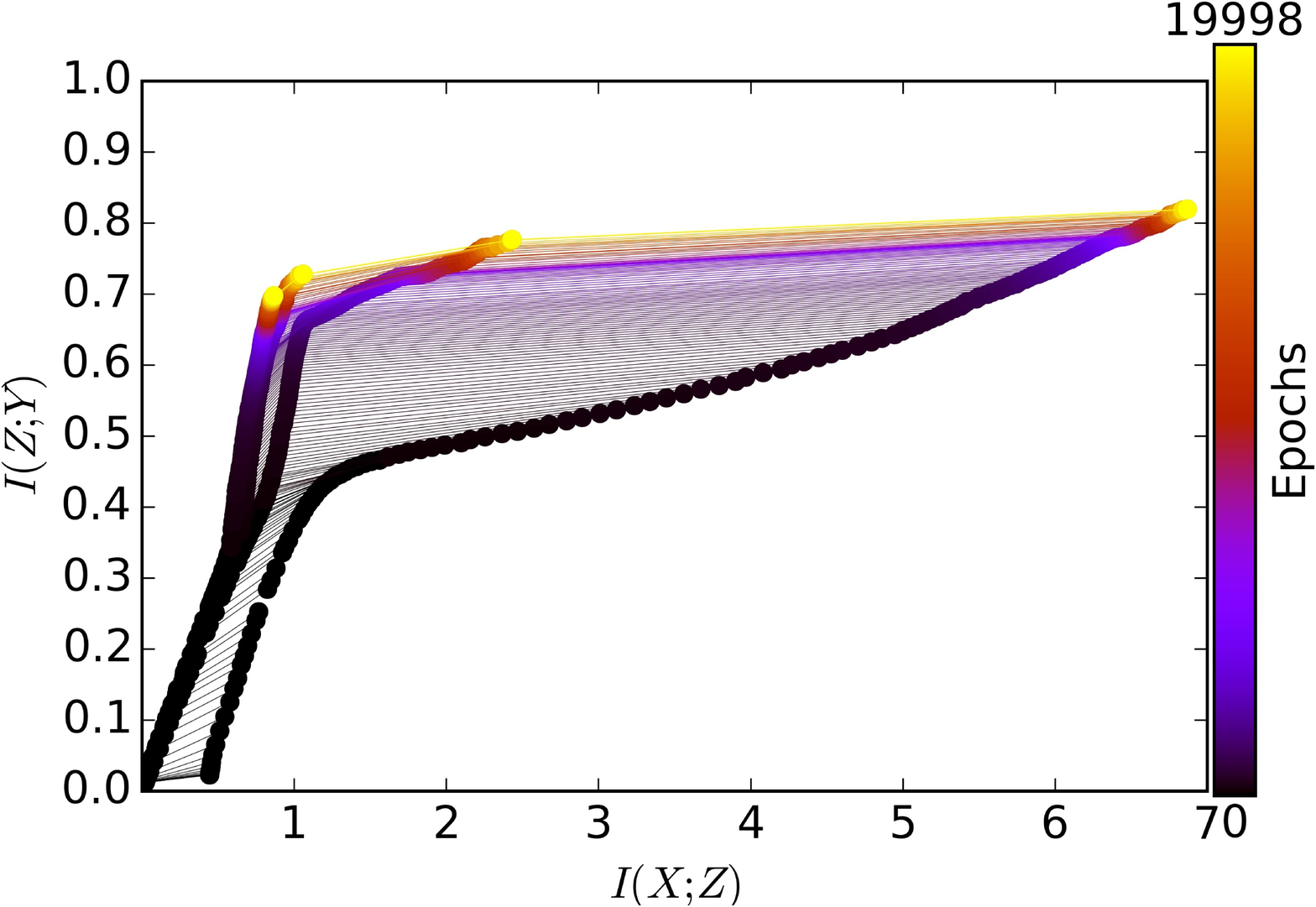}
    \label{fig:dynamics_sfnn}
  \end{minipage}\hfill
  \begin{minipage}[b]{0.5\textwidth}
	\includegraphics[scale=0.07]{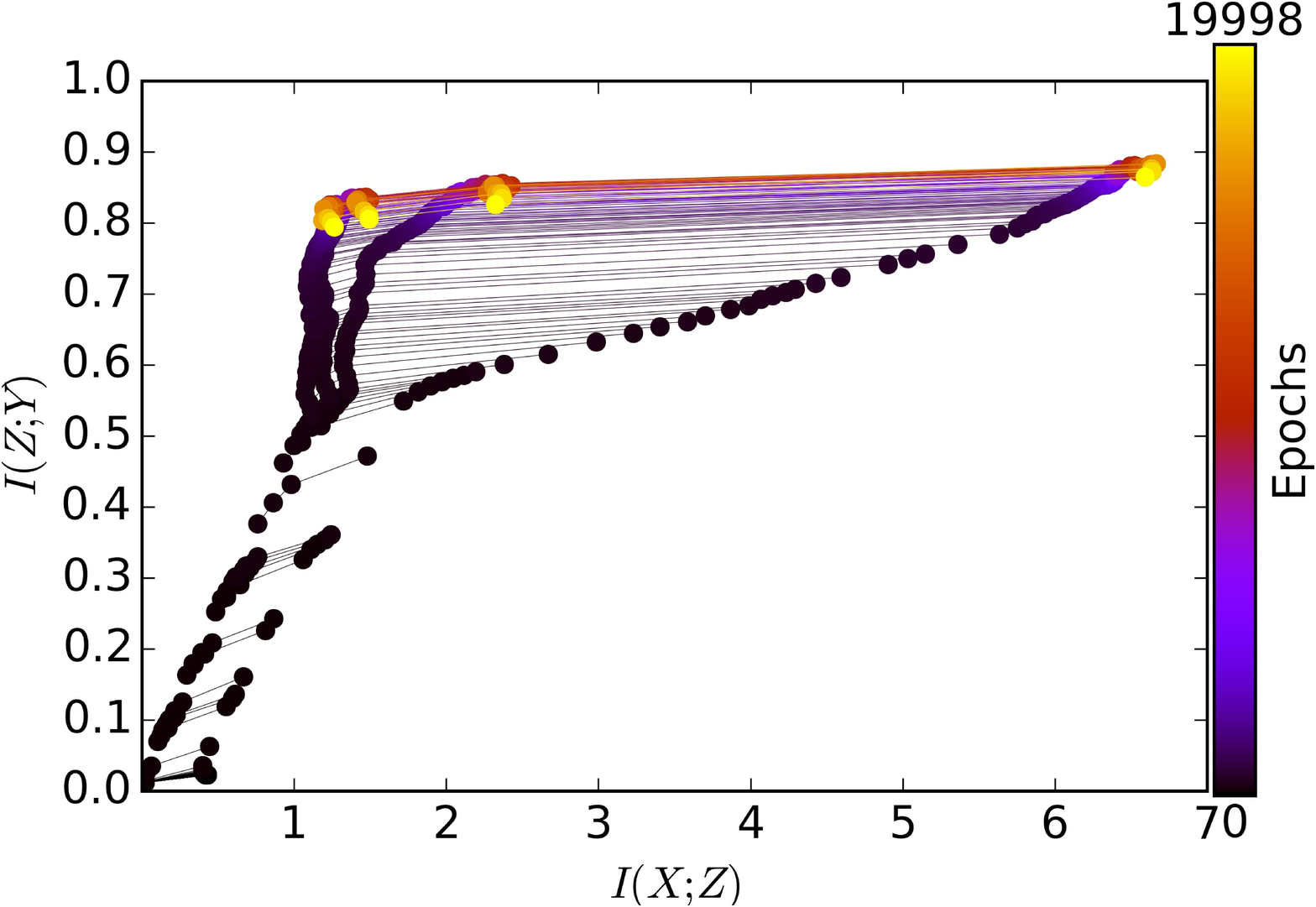}
    \label{fig:dynamics_jointpib}
  \end{minipage}
  \caption{The learning dynamics of SFNN (left) and IMB (right). The color indicates the training epochs while each node in a color in the graph represents $(I(Z_l; X), I(Z_l; Y))$ at the corresponding epoch. Note that at each epoch, $I(Z_{l}; X) \geq I(Z_{l+1}; X), \forall l$ (data-processing inequality). \texttt{JointIMB} jointly encodes relevant information into every layer of DNNs while keeping each layer informatively concise. As compared to MLE, the level of relavent information encoded by \texttt{JointIMB} increases more quickly over training epochs and reaches a higher value.}
  \label{fig:dynamic}
\end{figure*}

To better understand how MIB has modified the information within the layers during the learning process, we visualize the compression and relevance of each layer over the course of training of SFNN and \texttt{JointMIB} (the visualization for \texttt{GreedyPIB} is in the Appendix E (the supplementary)). To simplify our analysis, we considered a binary decision problem where $X$ is $12$ binary inputs making up $2^{12} = 4096$ equally likely input patterns and  $Y$ is a binary variable equally distributed among $4096$ input patterns \cite{DBLP:journals/corr/Shwartz-ZivT17}. The base neural network architecture had 4 hidden layers with widths: 10-8-6-4 neurons. Since the network architecture is small, we could precisely compute the (true) compression $I_x := I(Z_i; X)$ and (true) relevance $I_y := I(Z_i; Y)$ over training epochs. We fixed $\beta_l = \beta = 10^{-4}$ for both \texttt{JointIMB}, trained five different randomly initialized neural networks for each comparative model with SGD up to 20,000 epochs on $80\%$ of the data, and averaged the mutual information.  

Figure \ref{fig:dynamic}  provides a visualization of the learning dynamics of SFNN versus \texttt{JointIMB} on the information plane $(I_x, I_y)$. 
We can observe a common trend in the learning dynamics offered by both MLE (in SFNN model) and \texttt{JointIMB} framework. Both principles allow the network to gradually encode more information about $X$ and the relevant information about $Y$ into the hidden layers at the beginning as $I(Z_i; X)$ and $I(Z_i; Y)$ both increase. Especially, compression does occur in SFNN which is consistent with the result reported in (\cite{DBLP:journals/corr/Shwartz-ZivT17}). 




What distinguishes IMB from MLE is the maximum level of relevance at each layer and the number of epochs to encode the same level of relevance. 
Firstly, \texttt{JointIMB} at $l=1$ needs only about $4.68\%$ of the training epochs to achieve at least the same level of relevance in all layers of SFNN at the final epoch. Secondly, MLE is unable to encode the network layers to reach the maximum level of relevance enabled by IMB (We also trained SFNN up to $100,000$ epochs and observed that the level of relevance of each layer degrades before ever reaching the value of $0.8$ bits.).
We also see that the compression constraints within the IMB framework keep the layer representation from shifting to the the right (in the information plane) during the encoding of relevant information.

The reason that the relevance for IMB increases until some point before decreasing while the relevance for SFNN increases until some point where the value almost stays there (without a decrease) is, we believe, because that IMB can explicitly exploit the information from each layer in a way that is more effective than MLE. The IMB objective can allow the encoding of relevant information into each layer to its optimal information trade-off eventually at some point. After this point if we continue with the training, due to the mismatch between the exact IMB objective and its variational bound (Eq. 6,7, 10 in our draft), the further minimization of the variational bound would decrease $I(Z_l; Y)$ (consequently, in order to make sure $I(Z_l; X) - \beta_l I(Z_l; Y)$ small, $I(Z_l; X)$ also needs to decrease after this point to compensate for the decrease in $I(Z_l; Y)$). In the case of SFNN (trained with MLE), the MLE objective reaches its local minimum before the information of each layer can even reach its optimal information trade-off (if ever). This explains why the relevance and compression of SFNN almost stay the same after some point. This suggests that IMB is better than MLE in terms of exploiting information for each layer during the learning.

I think the claim we made in the draft, which is "Especially, compression does occur in SFNN which is consistent with the result reported in ([28])", is a bit general without much elaboration in the draft. By seeing compression there in Fig. 1.a., I think a more precise wording for that claim of "compression occurs" is that the increase of $I(Z_l; X)$ slows down at some point for deeper layers. Intuitively, in order for the representations $Z_l$ to make sense of the task, the representations should encode enough information about $X$; thus $I(Z_l; X)$ should increase (This is especially true for shallow layers because, due to the Markov chain property, the shallower a layer, the greater its burden of carrying enough information to make sense a task. This might also explain why the compression force, explained below, cannot dominate the force of increasing $I(Z_l; X)$ for shallow layers). However, instead of keeping increasing $I(Z_l; X)$ forever at the same rate, MLE trained with SGD slows down the increase of $I(Z_l; X)$ at some point for deep layers (e.g., Fig. 1.a., for $Z4, Z3, Z2$). The force that slows down the increase of $I(Z_l ;X)$ at some point we would say that compression force takes place. Depending on how strong that force of compression takes place (which in turn I believe depends on which tasks we are demonstrating and the structure of $p(X,Y)$ is used), the slowing down phenomenon might have a stronger effect of bending over the increase of $I(Z_l; X)$. The stronger the compression force, the more severe the bending over of the $I(Z_l; X)$ increase, though in Fig.1.a., the bending effect is not as strong as that present in Schwartz-Ziv and Tishby experiment. The idea that MLE trained with SGD has compression is very interesting because MLE principle is not explicitly set out for compression in its mind, but still has compression effect.  Schwartz-Ziv and Tishby have very well explained the compression effects of MLE trained with SGD; I think, if not mistakenly, most of their explanation is from SGD perspective, not from a learning principle perspective (which is MLE in this case).  I would believe that there is an alternative explanation from a learning principle perspective, and such an explanation would be very interesting. In the meantime, IMB takes over the implicit role of MLE and push it more explicitly: explicitly encourage such compression-relevance tradeoffs in all layers.  In IMB (fig. 1.b), the compression force is stronger (i.e., the bending over effect is stronger than that in MLE) while the relevance is pushed higher in much fewer epoches (represented by the colors). This suggests that MLE does not fully encourage compression-relevance tradeoffs for each layer even though MLE happens to do so implicitly in a limited way. Later down, explicit encouragement of compression-relevance into each layer would bring potential benefits for classification, adversarial learning and multi-modal learning. 

\section{Conclusion}
In this work, we introduce Information Multi-Bottlenecks (IMBs) which provides a principled information-theoretic learning of neural networks. We provide important insights about when IMBs are possible and how they can be approximated in stochastic neural networks. The principled inducing of compression and relevance into each layer of DNNs via IMBs also provides a promising improvement in classification problems and adversarial robustness. In general, IMBs also better exploits the neural representations for learning. We demonstrate the results in MNIST and CIFAR10. 

\newpage


\bibliographystyle{plain}
\bibliography{refs.bib}

\section*{Appendix}
\subsection*{Appendix A. Proof of Theorem \ref{thm:conflicting_optimality}}

\begin{proof}[{Proof} of Theorem \ref{thm:conflicting_optimality}]
We present a detailed proof for Theorem \ref{thm:conflicting_optimality} which uses the contradiction technique and  three following lemmas. 

\begin{lem}
\label{lemma:lemma1}
Given $Y \rightarrow X \rightarrow Z_1 \rightarrow Z_2$, we have 
\begin{align}
\label{eq:lemma1_eq1}
I(Z_2; X) &= I(Z_1;X) - I(Z_1;X | Z_2) \\
\label{eq:lemma1_eq2}
I(Z_2; Y) &= I(Z_1;Y) - I(Z_1;Y | Z_2)
\end{align}
\end{lem}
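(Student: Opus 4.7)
The plan is to derive both identities as direct applications of the chain rule for mutual information combined with the conditional independence relations encoded in the Markov chain $Y \rightarrow X \rightarrow Z_1 \rightarrow Z_2$. Recall that the chain rule states $I(A,B;C) = I(A;C) + I(B;C\mid A) = I(B;C) + I(A;C\mid B)$, and the Markov property $U \rightarrow V \rightarrow W$ is equivalent to $I(U;W\mid V) = 0$.

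For the first identity, I will expand $I(Z_1,Z_2; X)$ in two different orders. Writing $I(Z_1,Z_2;X) = I(Z_2;X) + I(Z_1;X\mid Z_2)$ on the one hand, and $I(Z_1,Z_2;X) = I(Z_1;X) + I(Z_2;X\mid Z_1)$ on the other, the Markov chain $X \rightarrow Z_1 \rightarrow Z_2$ gives $I(Z_2;X\mid Z_1)=0$. Equating the two expansions yields $I(Z_2;X) + I(Z_1;X\mid Z_2) = I(Z_1;X)$, which rearranges to Equation~(\ref{eq:lemma1_eq1}).

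For the second identity, I will repeat the same two-way expansion with $Y$ in the place of $X$, i.e., compare $I(Z_1,Z_2;Y) = I(Z_2;Y) + I(Z_1;Y\mid Z_2)$ with $I(Z_1,Z_2;Y) = I(Z_1;Y) + I(Z_2;Y\mid Z_1)$. The crucial step here is to justify $I(Z_2;Y\mid Z_1) = 0$, i.e., that $Y \rightarrow Z_1 \rightarrow Z_2$ is itself a Markov chain. This is the only non-routine part; I would derive it from the full chain $Y \rightarrow X \rightarrow Z_1 \rightarrow Z_2$ by noting that $p(z_2 \mid z_1, y) = \int p(z_2\mid z_1, x, y)\, p(x \mid z_1, y)\, dx = \int p(z_2 \mid z_1)\, p(x \mid z_1, y)\, dx = p(z_2 \mid z_1)$, where the second equality uses $Z_2 \perp (X,Y) \mid Z_1$ implied by the chain. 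Once $I(Z_2;Y\mid Z_1)=0$ is established, equating the two expansions gives Equation~(\ref{eq:lemma1_eq2}).

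I expect the only genuine obstacle to be making precise the reduction from the four-variable Markov chain to the sub-chain $Y \rightarrow Z_1 \rightarrow Z_2$; everything else is a symbolic application of the chain rule. A clean alternative, if one wishes to avoid reasoning about sub-chains directly, is to expand the triple mutual information $I(Z_1,Z_2;Y)$ by first absorbing $X$: since $Y \perp (Z_1,Z_2) \mid X$ from the full Markov chain, one can route the argument through $X$ to obtain the same conditional independence. Either way, the proof is essentially two lines of chain-rule algebra once the relevant conditional independence is in hand.
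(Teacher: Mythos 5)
Your proof is correct and takes essentially the same route as the paper: both expand $I(Z_1,Z_2;X)$ (respectively $I(Z_1,Z_2;Y)$) by the chain rule in two orders and use the Markov structure to conclude $I(Z_2;X\mid Z_1)=0$ (respectively $I(Z_2;Y\mid Z_1)=0$). Your explicit verification that $Y \rightarrow Z_1 \rightarrow Z_2$ is itself a Markov chain merely fills in a step the paper dismisses as ``similar.''
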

\begin{proof}[\textbf{Proof}]
If follows from \cite{Cover06} that
$I(X; Z_1;Z_2) = I(X; Z_2) {+} I(X; Z_1 | Z_2) \allowbreak{=} I(X; Z_1) {+} I(X; Z_2 | Z_1)$, but $I(X; Z_2 | Z_1) {=} 0$ since $X \not\perp Z_2 | Z_1$., hence Eq. (\ref{eq:lemma1_eq1}). The proof for Eq.  (\ref{eq:lemma1_eq2}) is similar by replacing variable $X$ with variable $Y$. 
\end{proof}
\begin{lem}
\label{lemma:lemma2}
Given $Y \rightarrow X \rightarrow Z_1 \rightarrow Z_2, 0 < \beta_2 < \infty$ and $H(X|Y) > 0$, let us define the {conditional Information} {Bottleneck} objective: 
\begin{align}
    \mathcal{L}^c := \mathcal{L}^c[p(z_2|z_1), p(z_1|x)] := I(Z_1;X | Z_2) - \beta_2 I(Z_1;Y | Z_2).
\end{align}
If $Z_1$ and $Z_2$ do not satisfy either of the following conditions: 
\begin{enumerate}
\item $Z_2$ is a \textit{sufficient statistic} of $Z_1$ for $X$ and $Y$ (i.e., $Y \rightarrow X \rightarrow Z_2 \rightarrow Z_1$),     \item $Z_2$ is independent of $Z_1$.
\end{enumerate}
Then, $\partial \mathcal{L}^c/ \partial p(z_1|x)$ depends on $\{p(z_2 | z_1)\}$. Informally, if the conditional variable $Z_2$ in the conditional Information Bottleneck objective $\mathcal{L}^c$ is not an ``trivial" transform of the bottleneck variable $Z_1$, $Z_2$ induces a non-trivial topology into the conditional Information Bottleneck objective. 
\end{lem}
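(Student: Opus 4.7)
The plan is to compute $\partial \mathcal{L}^c/\partial p(z_1|x)$ explicitly by functional differentiation, exhibit exactly how $p(z_2|z_1)$ enters it, and then show that the only structural constraints on $(Z_1, Z_2)$ that can remove this dependence are the two excluded cases. First, apply Lemma~\ref{lemma:lemma1} to rewrite $\mathcal{L}^c = [I(Z_1;X) - \beta_2 I(Z_1;Y)] - [I(Z_2;X) - \beta_2 I(Z_2;Y)]$. The first bracket is a functional of $p(z_1|x)$ alone (given the fixed data distribution), so its derivative carries no $p(z_2|z_1)$ dependence. For the second bracket, use the chain rule through $p(z_2|x) = \int p(z_2|z_1) p(z_1|x)\,dz_1$ together with the standard IB variational formulas for $\partial I(Z;X)/\partial p(z|x)$ and $\partial I(Z;Y)/\partial p(z|x)$. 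Up to a factor of $p(x)$ and normalization constants, this produces
\begin{equation*}
\frac{\partial \mathcal{L}^c}{\partial p(z_1|x)} = \underbrace{\log\tfrac{p(z_1|x)}{p(z_1)} + \beta_2 D_{KL}\!\bigl(p(y|x)\,\|\,p(y|z_1)\bigr)}_{A(z_1,x)} \;-\; \int p(z_2|z_1)\,B(z_2,x)\,dz_2,
\end{equation*}
where $B(z_2,x) := \log\tfrac{p(z_2|x)}{p(z_2)} + \beta_2 D_{KL}(p(y|x)\,\|\,p(y|z_2))$, and $p(z_2|x), p(z_2), p(y|z_2)$ are themselves built from $p(z_2|z_1)$.

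Next, I would argue by contrapositive: if $\partial \mathcal{L}^c / \partial p(z_1|x)$ does not depend on $p(z_2|z_1)$, then the integral term above must be invariant under every admissible perturbation $p(z_2|z_1) \mapsto p(z_2|z_1) + \epsilon q(z_2|z_1)$ with $\int q(z_2|z_1)\,dz_2 = 0$. The first-order change decomposes into a direct piece from the integration kernel and an induced piece from $B$ (via the dependence of $p(z_2|x), p(z_2), p(y|z_2)$ on $p(z_2|z_1)$). Requiring this to vanish for every admissible $q$ and every $(z_1, x)$ yields an integral equation that, combined with the Markov structure $X \to Z_1 \to Z_2$ and the non\-degeneracy assumption $H(X|Y) > 0$, forces $B(z_2,x) \equiv 0$ on the support of $p(z_2,x)$. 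This self-consistency reduces to exactly two possibilities: either $p(z_2|x) \equiv p(z_2)$, in which case $I(Z_2;X) = I(Z_2;Y) = 0$ and $Z_1 \perp Z_2$ (Condition~2); or $Z_2$ already absorbs all information of $Z_1$ about $(X,Y)$, i.e.\ $Y \to X \to Z_2 \to Z_1$ (Condition~1).

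To close the argument I would verify the two excluded cases directly: under Condition~1, $I(Z_1;X|Z_2) = I(Z_1;Y|Z_2) = 0$ so $\mathcal{L}^c \equiv 0$ and its derivative is trivially $p(z_2|z_1)$-independent; under Condition~2, $p(z_2|z_1) = p(z_2)$ collapses $p(z_2|x)$ to $p(z_2)$, leaving $\mathcal{L}^c = I(Z_1;X) - \beta_2 I(Z_1;Y)$, again with no $p(z_2|z_1)$ dependence. The main obstacle I expect is the perturbation step: cleanly disentangling the direct kernel dependence on $p(z_2|z_1)$ from the implicit dependence routed through the induced marginals $p(z_2|x), p(z_2), p(y|z_2)$, and showing that these two contributions cannot cancel for generic perturbations $q$ unless one of the two structural conditions holds. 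The hypothesis $H(X|Y) > 0$ is what rules out degenerate couplings (e.g.\ $Y$ determining $X$) that would otherwise permit accidental cancellations to occur along special directions.
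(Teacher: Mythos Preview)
Your route is genuinely different from the paper's. The paper does not compute the derivative at all: it simply observes that $I(Z_1;X\mid Z_2)$ is a functional of the joint $p(x,z_1,z_2)=p(x)p(z_1\mid x)p(z_2\mid z_1)$, asserts that therefore $\partial I(Z_1;X\mid Z_2)/\partial p(z_1\mid x)$ carries $p(z_2\mid z_1)$ unless ``$Z_2$ vanishes'' from the objective, and then characterizes ``vanishing'' by the two boundary equalities $I(Z_1;X\mid Z_2)=0$ or $I(Z_1;X\mid Z_2)=I(Z_1;X)$ coming from Lemma~\ref{lemma:lemma1}. The role of $H(X\mid Y)>0$ is stated only as preventing cancellation between the $X$- and $Y$-terms. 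Your approach---rewriting $\mathcal{L}^c$ via Lemma~\ref{lemma:lemma1} as a difference of two IB functionals, computing the functional derivative explicitly, and isolating the $p(z_2\mid z_1)$ dependence in the term $\int p(z_2\mid z_1)B(z_2,x)\,dz_2$---is considerably more concrete and makes precise what the paper leaves informal.

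There is one real gap to watch in your perturbation step. From the direct variation alone, vanishing of $\int q(z_2\mid z_1)B(z_2,x)\,dz_2$ for all admissible $q$ with $\int q\,dz_2=0$ only forces $B(z_2,x)$ to be \emph{constant in $z_2$} (for each $x$), not $B\equiv 0$. To get from ``$B$ constant in $z_2$'' (plus the induced variation of $B$ through $p(z_2\mid x),p(z_2),p(y\mid z_2)$) down to exactly the two structural alternatives is the nontrivial part, and your sketch does not yet show how the induced term and the constancy constraint combine to pin you to $I(Z_2;X)=0$ or sufficiency. You correctly flag this as the main obstacle; just be aware that the paper's own argument is equally informal at precisely this juncture, so filling it in would actually strengthen the result beyond what the paper proves.
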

\begin{proof}[\textbf{Proof}]
 By the definition of the conditional mutual information
 \begin{align*}
     I(Z_1; X | Z_2) = \int \int \int p(x, z_1, z_2) \log \frac{p(z_1, x | z_2)}{p(z_1 | z_2) p(x|z_2)} dz_2 dz_1 dx
 \end{align*}
 $I(Z_1; X | Z_2)$ depends on $p(x, z_1, z_2)$ as long as the presence of $Z_2$ in the conditional Information Bottleneck objective does not vanish (we will discuss the conditions for $Z_2$ to vanish in the final part of this proof). Note that due to the Markov chain $X \rightarrow Z_1 \rightarrow Z_2$, we have $p(x, z_1, z_2) = p(x) p(z_1 |x) p(z_2 | z_1)$. Thus, $\partial I(Z_1; X | Z_2) / \partial p(z_1 | x)$ depends on $p(z_2 | z_1)$ as long as $Z_2$ does not vanish in the objective. The same result is applied to $\partial I(Z_1; Y | Z_2) / \partial p(z_1 | x)$; hence $\partial \mathcal{L}^c/ \partial p(z_1|x)$ depends on $\{p(z_2 | z_1)\}$ (note that $H(X|Y) > 0$ prevents the collapse of $\mathcal{L}^c$ when summing two mutual information) if $Z_2$ does not vanish in the objective. 
 
 Now we discuss the vanishing condition for $Z_2$ in the objective. Note that it follows from Lemma \ref{lemma:lemma1} that: 
\begin{align*}
     0 \leq I(Z_1; X | Z_2) \leq I(Z_1; X) \\
     0 \leq I(Z_1; Y | Z_2) \leq I(Z_1; Y)
 \end{align*}
 
 It is easy to see that $Z_2$ vanishes in the conditional Information Bottleneck objective $\mathcal{L}^c$ iff each of the mutual information in $\mathcal{L}^c$  does not depend on $Z_2$ iff the equality conditions for the inequalities immediately above are met. Note that if $I(Z_1; X | Z_2) = 0$, then then we have $Y \rightarrow X \rightarrow Z_2 \rightarrow Z_1$ (i.e., $Z_2$ is a sufficient statistic for $X$ and $Y$). This implies that $I(Z_1; Y | Z_2) = 0$. Similarly, $I(Z_1; X | Z_2) = I(Z_1; X)$ implies that $Z_2$ is independent of $Z_1$ which in turn implies that $I(Z_1; Y | Z_2) = I(Z_1; Y)$.
\end{proof}

Now we prove Theorem (\ref{thm:conflicting_optimality}) by contradiction and three lemmas above. 

$(\Rightarrow)$ First we prove that if $Z_1$ and $Z_2$ satisfy neither condition (a) nor (b) in the theorem, the constrained minimization problems are conflicting. Assume, by contradiction, that there exists a solution that minimizes both $\mathcal{L}_1$ and $\mathcal{L}_2$ simultaneously, i.e., $\exists p( {z}_1| {x}), p( {z}_2 |  {z}_1) $ s.t. $\mathcal{L}_1$ has a minimum at $\{p( {z}_1| {x})\}$ and $\mathcal{L}_2$ has a minimum at $\{p( {z}_1| {x}), p( {z}_2 |  {z}_1)\}$. Note that $\{p( {z}_1| {x})\}$ and $\{p( {z}_2| z_1)\}$ are independent variables for the optimizations. By introducing Lagrangian multipliers $\lambda_1(x)$ and $\lambda_2(x)$ for the constraint $\int p(z_1|x) dz_1 = 1$ of $\mathcal{L}_1$ and $\mathcal{L}_2$, respectively, we obtain: 
\begin{align}
    \frac{\partial L_1}{\partial p(z_1 | x)} = 0 \\
    \frac{\partial L_2}{\partial p(z_1 | x)} = 0
\end{align}
where 
\begin{align}
    L_1 := I(Z_1; X) - \beta_1 I(Z_1; Y) - \int\int\lambda_1(x) p(z_1 | x) dz_1 dx \\
    L_2 := I(Z_2; X) - \beta_2 I(Z_2; Y) - \int\int\lambda_2(x) p(z_1 | x) dz_1 dx. 
\end{align}
It follows from Lemma \ref{lemma:lemma1} that:
\begin{align}
    \label{eq:L2_L1}
    L_2 - L_1 = (\beta_1 - \beta_2) I(Z_1; Y) - \mathcal{L}^c -  \int\int (\lambda_2(x) - \lambda_1(x)) p(z_1 | x) dz_1 dx. 
\end{align}
where $\mathcal{L}^c = I(Z_1;X | Z_2) - \beta_2 I(Z_1;Y | Z_2)$ (defined as in Lemma \ref{lemma:lemma2}). Now take the derivative w.r.t $p(z_1 | x)$ both sides of Eq. \ref{eq:L2_L1} with notice that $\partial L_1 / \partial p(z_1 | x) = \partial L_2 / \partial p(z_1 | x) = 0$, we have: 
\begin{align}
    \label{eq:contradict_eq}
    \frac{\partial \mathcal{L}^c}{\partial p(z_1 | x)} = (\beta_1 - \beta_2) \frac{\partial I(Z_1; Y) }{\partial p(z_1 | x)} + \lambda_1(x) - \lambda_2(x). 
\end{align}
Notice that the left hand side of Eq. \ref{eq:contradict_eq} strictly depends on $\{p(z_2 | z_1)\}$ (Lemma \ref{lemma:lemma2}) while the right hand side is independent of $\{p(z_2 | z_1)\}$. This contradiction implies that the initial existence assumption is invalid; thus implies the conclusion in Theorem \ref{thm:conflicting_optimality}. 

($\Leftarrow$) The the direction is obvious. When $Z_1$ and $Z_2$ satisfy condition (a) (i.e., $I(Z_2; X) = I(Z_1; X)$ and $I(Z_2; Y) = I(Z_1; Y)$) or (b) (i.e., $I(Z_2; X) = 0$ and $I(Z_2; Y) = 0$) in the theorem, there are effectively only one optimization problem for $\mathcal{L}_1$, and this reduces into the original Information Bottleneck (with single bottleneck) (\cite{Tishby99}). After solving for $Z_1$ from the Information Bottleneck optimization, we can construct $Z_2$ as a sufficient statistic of $Z_1$.  
\end{proof}

\subsection*{Appendix B. Proof of Theorem 3.2}
\begin{proof}[Proof of Theorem 2]
The first claim of the theorem immediately follows from the definition of VCR: 
\begin{align}
\tilde{H}(Y | Z_0) &= -\mathbb{E}_{(X, Y)_D} \mathbb{E}_{Z_0|X}  \log p(\hat{Y} | Z_0) \nonumber \\ 
&= -\mathbb{E}_{(X, Y)_D} \log p(\hat{Y} | X).
\end{align}
For the second claim of the theorem, it follows from the Markov assumption in Equation (1) and from Jensen's inequality, respectively, that: 
\begin{align}
p(\ {\hat{y}}|\ {z}) =   p(\ {\hat{y}}|\ {z}_L, \ {z}_{L-1}, ..., \ {z}_1) = p(\ {\hat{y}}|\ {z}_L)
\end{align}
\begin{align}
\int p(\ {z}|\ {x}) \log p (\ {\hat{y}}|\ {z}) d\ {z} &\leq \log \left( \int p(\ {z}|\ {x})p (\ {\hat{y}}|\ {z}) d\ {z} \right) \nonumber \\
&= \log p(\ {\hat{y}}|\ {x})
\end{align}
Thus, we have 
\begin{align}
- \mathbb{E}_{(X,Y)_D} \log p(\hat{Y}|X) &\leq  - \mathbb{E}_{(X,Y)_D} \int p(\ {z}|X) \log p (\hat{Y}|\ {z}) d\ {z} \nonumber \\
&\!\!\!\!\!\!\!\!\!\!\!\!\!=  -\mathbb{E}_{(X, Y)_D} \mathbb{E}_{Z|X}  \log p(\hat{Y} | Z) \nonumber\\ 
&\!\!\!\!\!\!\!\!\!\!\!\!\!= \tilde{H}(Y | Z) \\
&\!\!\!\!\!\!\!\!\!\!\!\!\!=  -\mathbb{E}_{(X, Y)_D} \mathbb{E}_{Z_L|Z_{L-1}, ..., Z_1|Z_0}  \log p(\hat{Y} | Z_L) \nonumber\\ 
&\!\!\!\!\!\!\!\!\!\!\!\!\!= -\mathbb{E}_{(X, Y)_D} \mathbb{E}_{Z_L|X}  \log p(\hat{Y} | Z_L) \nonumber\\ 
&\!\!\!\!\!\!\!\!\!\!\!\!\!= \tilde{H}(Y | Z_L),
\end{align}

\subsection*{Appendix C. Detailed derivations of Approximate Compression}
\begin{align}
I(Z_l; X) &\leq I(Z_l; Z_{l-1}) = \int p( {z}_{l} |  {z}_{l-1} ) p( {z}_{l-1}) \log \frac{p( {z}_l |  {z}_{l-1})}{p( {z}_l)} d  {z}_l d  {z}_{l-1} \nonumber \\
&\leq \int p( {z}_{l} |  {z}_{l-1} ) p( {z}_{l-1}) \log \frac{p( {z}_l |  {z}_{l-1})}{r( {z}_l)} d  {z}_l d  {z}_{l-1} 
 =  \mathbb{E}_{Z_{l-1}}  D_{KL} \left[ p(Z_{l} | Z_{l-1}) || r(Z_{l}) \right] \nonumber \\
&=  \mathbb{E}_{Z_{l-1}} \sum_{i=1}^{n_l} D_{KL} \left[ p(Z_{l,i} | Z_{l-1}) || r(Z_{l,i}) \right] =: \tilde{I}(Z_l; Z_{l-1})
\end{align}

\subsubsection*{Appendix D. Detailed experimental description in classification experiment}
Adopted from the common practice, we used the last 10,000 images of the training set as a validation (holdout) set for tuning hyper-parameters. We then retrained models from scratch in the full training set with the best validated configuration. We trained each of the five models with the same set of 5 different initializations and reported the average results over the set. For the stochastic models (all except DET), we drew $M = 32$ samples per stochastic layer during both training and inference, and performed inference $10$ times at test time to report the mean of classification errors for MNIST and classification accuracy for CIFAR10. For \texttt{JointIMB} and \texttt{GreedyIMB}, we set $\gamma_l = 1$ (in \texttt{JointIMB} only) and $\beta_l = \beta, \forall 1 \leq l \leq L$, tuned $\beta$ on a linear log scale $\beta \in \{10^{-i}: 1 \leq i \leq 10 \}$. We found $\beta = 10^{-4}$ worked best for both models. For VIB, we found that $\beta=10^{-3}$ and $\beta=10^{-4}$ worked best on MNIST and CIFAR10, respectively. We trained the models on MNIST with Adadelta optimization (\cite{DBLP:journals/corr/abs-1212-5701}) and on CIFAR10 with Adagrad optimization (\cite{DBLP:journals/jmlr/DuchiHS11}) (except for VIB we used Adam optimization (\cite{DBLP:journals/corr/KingmaB14}) as we found that they worked best in the validation set. 

\subsubsection*{Appendix E. Learning dynamics of \texttt{GreedyIMB}}
We further present the visualization of the learning dynamics of \texttt{GreedyIMB} in Figure \ref{fig:GreedyPIB_dyn}.  \texttt{GreedyIMB} at $l=1$ needs only about $17.95\%$ of the training epochs to achieve at least the same level of relevance in all layers of SFNN at the final epoch. Recall that in \texttt{GreedyIMB} at $l{=}1$ the PIB principle is applied to the first hidden layer only. The layer representation at the final epoch gradually shifts to the left (i.e., more compressed) while not degrading the relevance over the greedy training from layer $1$ to layer $4$ in Figure \ref{fig:GreedyPIB_dyn}.
We also see that the compression constraints within the IMB framework keep the layer representation from shifting to the the right (in the information plane) during the encoding of relevant information. 
\begin{figure*}[t]
  \centering
  \begin{minipage}[b]{0.5\textwidth}
	\includegraphics[scale=0.07]{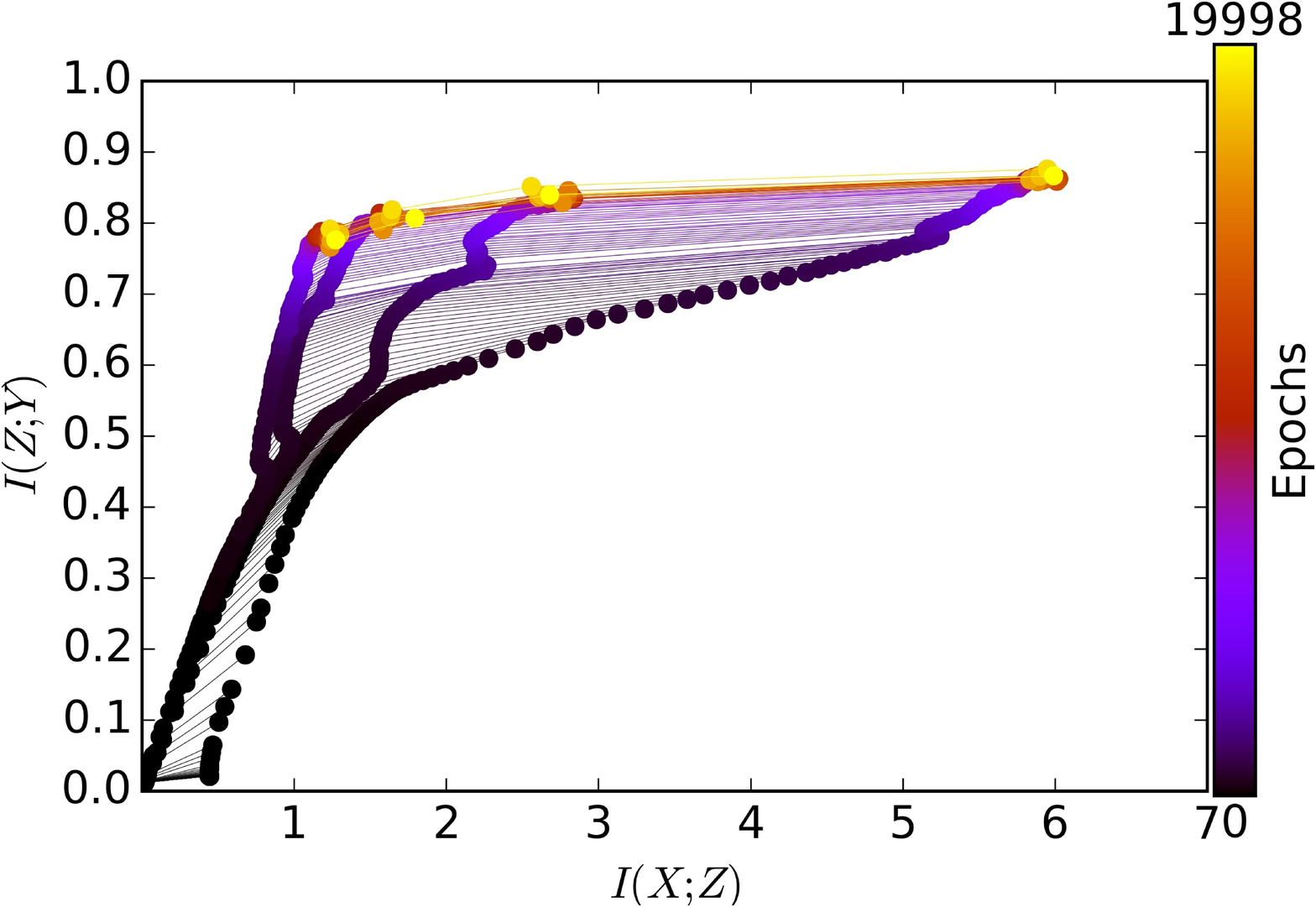}
    \label{fig:dynamics_greedy_l1}
  \end{minipage}\hfill
  \begin{minipage}[b]{0.5\textwidth}
	\includegraphics[scale=0.07]{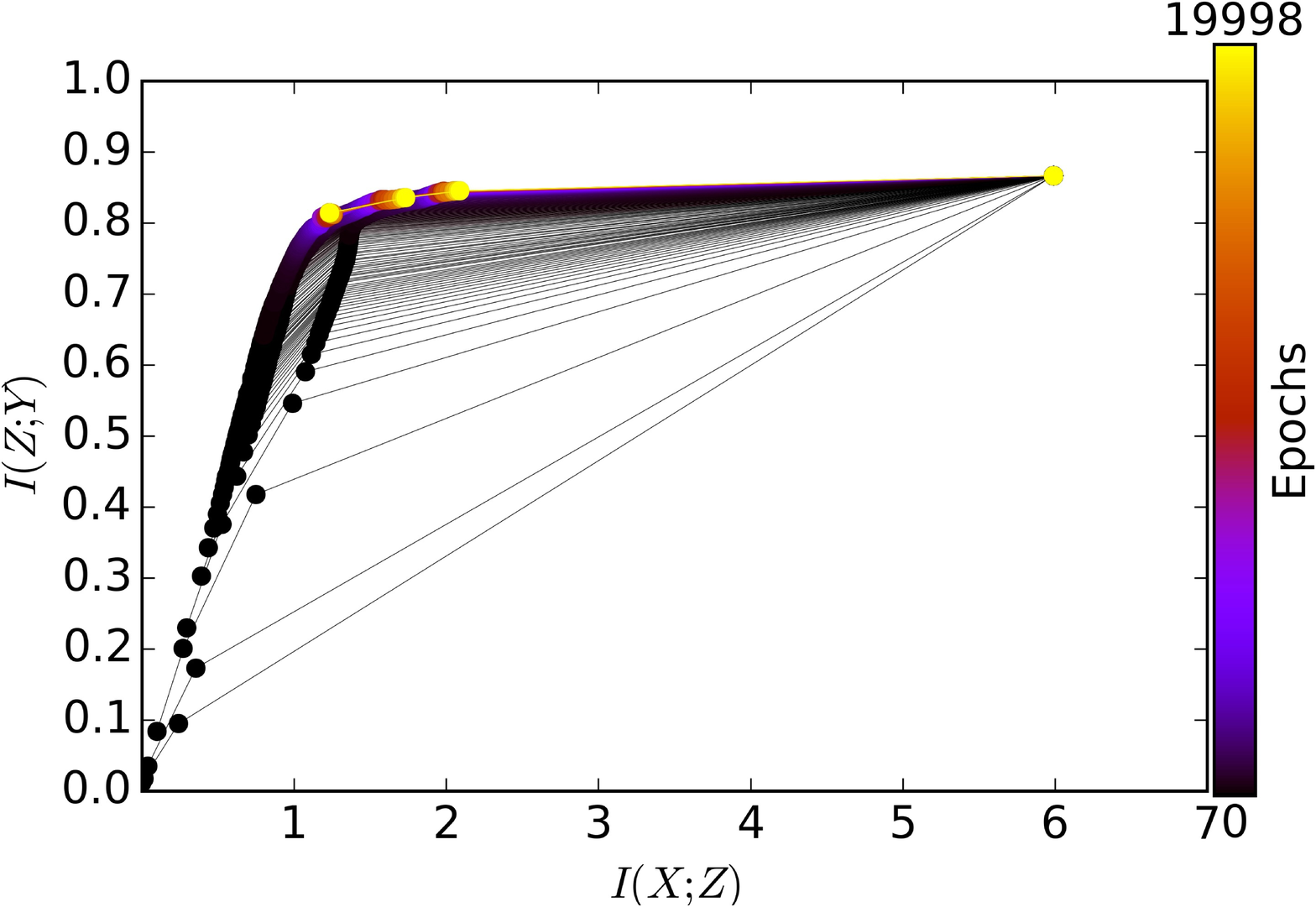}
     \label{fig:dynamics_greedy_l2}
  \end{minipage}
  \begin{minipage}[b]{0.5\textwidth}
	\includegraphics[scale=0.07]{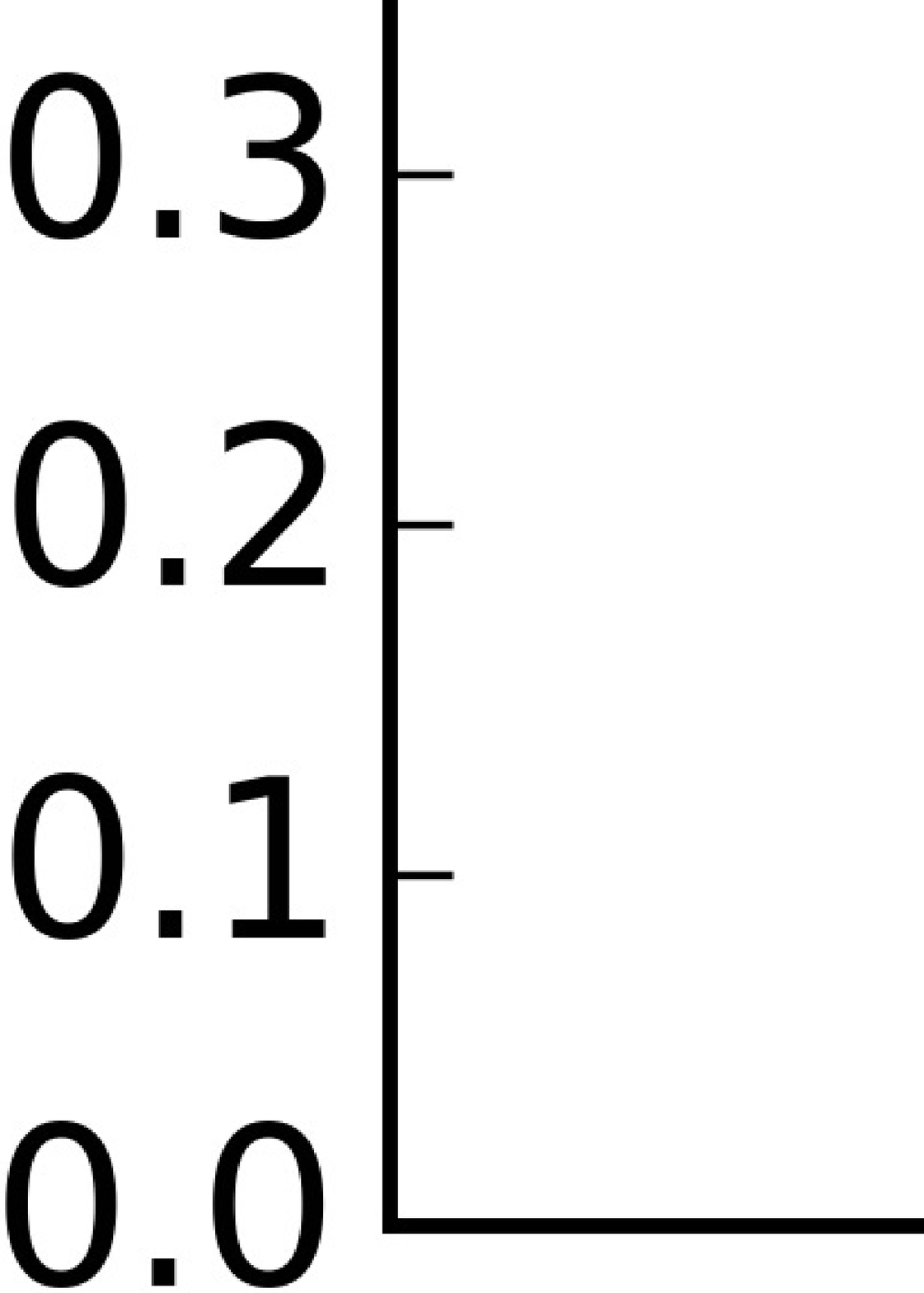}
     \label{fig:dynamics_greedy_l3}
  \end{minipage}\hfill 
  \begin{minipage}[b]{0.5\textwidth}
	\includegraphics[scale=0.07]{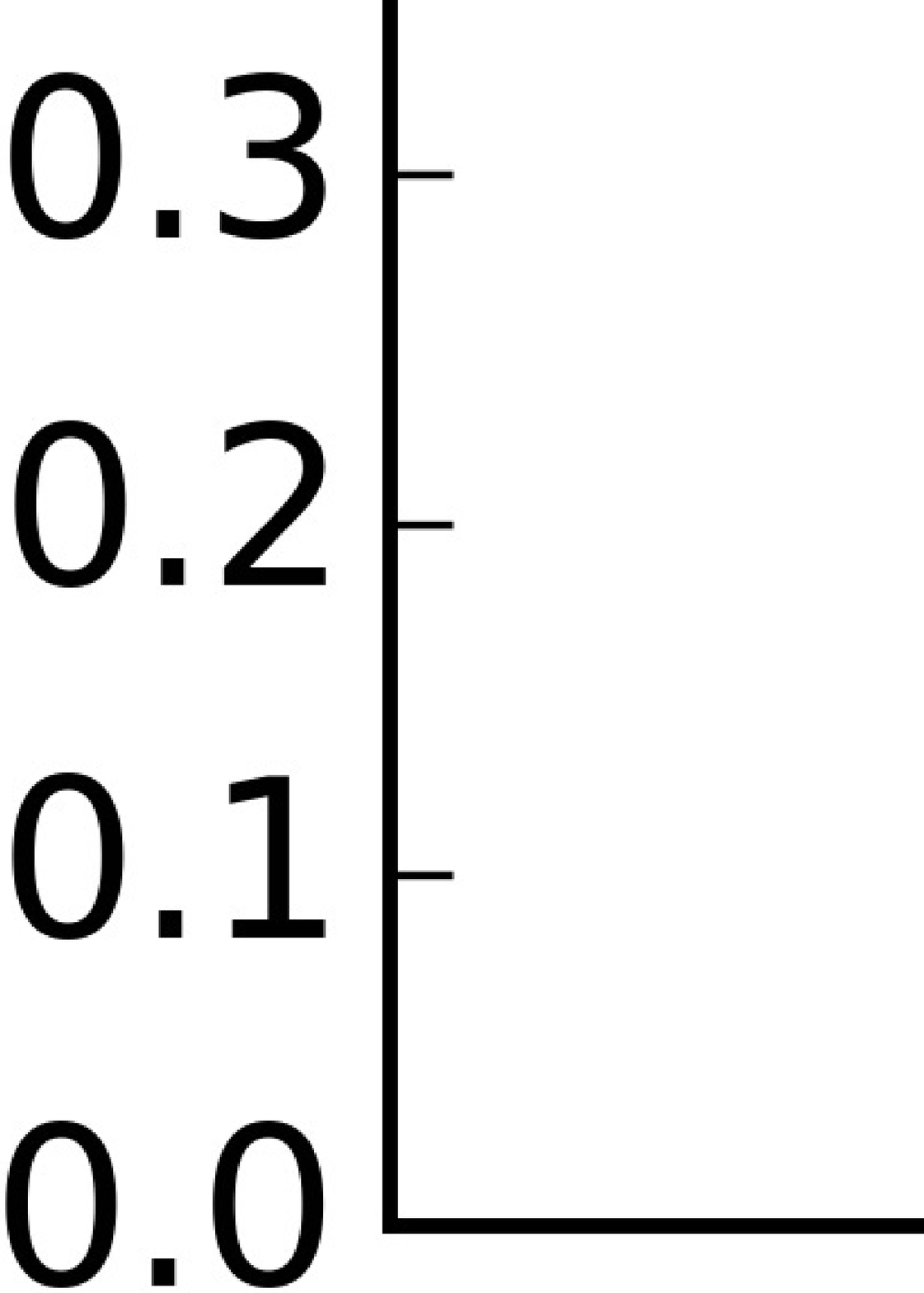}
     \label{fig:dynamics_greedy_l4}
  \end{minipage}
  \caption{From left to right, top to bottom represent \texttt{GreedyIMB}'s encoding of relevant information into layer $1 \leq l \leq 4$, respectively. \texttt{GreedyIMB} greedily encodes relevant information into each layer given the information optimality of the previous layers. \texttt{GreedyIMB} also achieves a significantly higher level of relevant information at each layer compared to MLE.}
  \label{fig:GreedyPIB_dyn}
\end{figure*}

\subsubsection*{Appendix F. VCR decomposition for a multivariate target variable}
We will prove that the VCR of level $l$ for a multivariate variable ${y}$ can be decomposed as the sum of the VCRs of each of its vector elements. Indeed, consider $Y = (Y_1,..., Y_n)$. It follows from the fact that the neurons within a layer are conditionally independent given the previous layer that we have:
\begin{align*}
\tilde{H}(Y|Z_l) &= -\mathbb{E}_{(X,Y_1, ..., Y_n)_D} \mathbb{E}_{Z_l|X} \log  p(\hat{Y}_1,..., \hat{Y}_n|Z_l) \\ 
&\!\!\!\!\!\!\!\!\!\!\!= -\mathbb{E}_x \mathbb{E}_{Y_1,..., Y_n | X} \mathbb{E}_{Z_l|X} \sum_{i=1}^n \log p(\hat{Y}_i | Z_l)  \\ 
&\!\!\!\!\!\!\!\!\!\!\!= \sum_{i=1}^{n} -\mathbb{E}_{X} \mathbb{E}_{Y_i| X} \mathbb{E}_{Z_l|X} \log p(\hat{Y}_i | Z_l) \\ 
&\!\!\!\!\!\!\!\!\!\!\!= \sum_{i=1}^n \tilde{H}(Y_i|Z_l),
\end{align*}
implying the claim about the decomposibility of VCR for a multivariate target variable.
\end{proof}

\end{document}